\newtheorem{assumption}{Assumption}
\begin{document}
\mainmatter              
\title{Improving Offline Reinforcement Learning with Hypercube-Enhanced Policy Regularization}
\titlerunning{Offline Reinforcement Learning}  
%
\author{Yi Shen\inst{1}\and
Hanyan Huang\inst{1}}
%
%
%
\institute{Sun Yat-Sen University, Guangzhou 510275, China }

\maketitle              

\begin{abstract}
Offline reinforcement learning has been widely studied for its ability to learn policies through static datasets. However, the limited radiation range of static datasets leads to overestimation of some out-of-distribution (OOD) samples, causing suboptimal policy obtained through distribution shift - a critical challenge in offline RL.

While the policy regularization method addresses this by cloning dataset sampling policies, its effectiveness is restricted by excessive constraints on policy improvement.

To address this and further improve the performance of algorithms, a hypercube policy regularization framework is proposed, which combines local exploration with policy regularization. This method alleviates the constraints of policy regularization methods by allowing the agent to explore the actions corresponding to similar states in the static dataset. Theoretical guarantees are established for performance enhancement. In addition, when integrated with TD3-BC and Diffusion-QL as TD3-BC-C and Diffusion-QL-C, state-of-the-art results are achieved on D4RL benchmarks: a 24\% improvement is observed in suboptimal datasets with significant performance gains across most environments.

\keywords{reinforcement learning, offline reinforcement learning, local exploration, policy regularization.}
\end{abstract}
\section{Introduction}\label{sec1}

The objective of Reinforcement Learning (RL) is to learn an optimal policy through the interaction between the agent and the environment. At present, there have been some notable achievements in the field, such as games\cite{vinyals2019grandmaster}, Large Model\cite{achiam2023gpt} and navigator\cite{kaufmann2023champion}.

However, a potential issue with general RL is that the agent must interact with the environment during training, potentially causing harm to the environment or the agent, leading to economic losses\cite{arulkumaran2017deep}.
One potential solution to this problem is offline Reinforcement Learning (offline RL)\cite{lange2012batch}. In offline RL, the agent tries to learn an optimal policy from a static dataset. Since the training of offline RL only requires a static dataset, it does not have the problem of the environment or agent being damaged during training.
Nevertheless, the static dataset is typically unable to cover the full spectrum of potential state-action combinations, and the uncovered pairs are termed out-of-distribution (OOD) state-action pairs\cite{fujimoto2019off}. Q or V-value estimators may overestimate the value of some suboptimal OOD pairs, leading to a preference for selecting such OOD samples with high value and resulting in suboptimal policies. This challenge, known as the distribution shift problem, is a critical problem for offline RL. 

To address this issue, Current methods can be mainly classified into policy regularization\cite{ran2023policy}, Q-value regularization\cite{DBLP:conf/iclr/GengPKC24}, importance sampling\cite{DBLP:conf/iclr/LeeG0NK24}, uncertainty estimation\cite{ bai2022pessimistic} and model-based methods\cite{lee2024spqr}. 
Among the aforementioned methods, policy regularization methods attempt to clone the policy used for static datasets. Current algorithms utilizing policy regularization methods achieve good results when combined with advanced policy clone models such as diffusion models to clone the behavior of policy distribution\cite{wang2022diffusion}. 
However, the excessive constraints imposed by these methods severely limit the policy improvement, especially in low-quality datasets. When applied to low-quality static datasets, policies trained through this approach demonstrate inferior performance compared with other methodologies.

Q-value regularization algorithms achieve better policies by exploring OOD state-action pairs. However, unrestricted exploration requires precise Q-value constraints. The associated computational overhead significantly reduces practical applicability.

The core limitation stems from an unresolved trade-off: Policy regularization ensures stability but sacrifices exploration capability, while Q-value regularization enables exploration but incurs prohibitive computational costs. 
Motivated by the complementary characteristics of the above two methods, the combination of policy regularization and Q-value regularization methods may yield superior results. 
Therefore, this article seeks to permit the agent to explore some promising actions and replicate the overall state actions that have been explored. It can be noted that for any given state, there may be multiple similar states within the static dataset. Furthermore, in datasets, especially low-quality datasets, such similar states may have better actions due to the randomness of the sampling actions. Therefore, an attempt is made to set the exploration scope of the agent in a given state limited to actions that correspond to adjacent states. 

Inspired by Latin Hypercube Sampling’s\cite{stein1987large} efficient space partitioning, in the hypercube policy regularization framework, hypercubes are defined to allow for localized action exploration within state neighborhoods supported by the offline dataset.
Compared with current policy regularization methods, which are ineffective in low-quality datasets, and the current Q-value regularization methods, which require a long training time\cite{kumar2020conservative}. 
The hypercube policy regularization framework directly addresses the limitations above, improves the performance of algorithms while maintaining the same training time as policy regularization methods. Another advantage of this framework is its versatility, which allows it to be applied to a variety of policy regularization algorithms such as TD3-BC\cite{fujimoto2021minimalist} and Diffusion-QL\cite{wang2022diffusion}. The combined algorithm, TD3-BC-C and Diffusion-QL-C exhibited remarkable performance in experiments conducted on the D4RL dataset\cite{fu2020d4rl}. TD3-BC-C and Diffusion-QL-C can achieve better results compared to those of state-of-the-art(SOTA) algorithms such as Diffusion-QL.

  The remainder of this paper is organized as follows. Firstly, the related work is introduced. Secondly, a detailed explanation of the hypercube policy regularization framework is provided. Finally, experiments for the new algorithms are conducted on the D4RL dataset.

\section{Related Work}
This article is mainly related to the policy regularization method in offline RL. The basic idea of this method is to solve the distribution shift problem by constraining the distance between the training policy and the corresponding state-action pairs within the static dataset.

Among the policy regularization methods, BCQ\cite{fujimoto2019off} considers the extrapolation error and adds disturbances based on the use of VAE to generate actions to solve the extrapolation error. BEAR\cite{kumar2019stabilizing} considers that previous algorithms cannot fit OOD state-actions well. To deal with it, BEAR adds weighted behavior clone loss in the optimization process of the policy to solve the problem. Compared with BCQ and BEAR, which require more operations, TD3-BC\cite{fujimoto2021minimalist} achieves better results by adding only a few modifications to BC. Compared with the above algorithms, IQL~\cite{kostrikov2021offline} additionally introduces the state value function to update the Q-value function and policy. Diffusion-QL\cite{wang2022diffusion} achieves a state-of-the-art (SOTA) level by using conditional diffusion models for behavior clones and policy regularization.
However, as mentioned before, policy regularization methods rely on data within static datasets, which may result in suboptimal results when the quality of static datasets is poor.

On the other hand, advanced Q-value constraint algorithms constrain the Q-values of OOD state-actions and thus allow the agent to explore the entire state-action space. For instance, EDAC\cite{an2021uncertainty} and PBRL \cite{bai2022pessimistic} can have good results in low-quality datasets. However, such algorithms usually require a long computing time to find an excellent policy.

Considering the characteristics of the above two methods, local space exploration is integrated into the overall policy regularizations. The exploration capacity of policy regularization algorithms is enhanced through the proposed approach, thereby improving algorithmic performance while reducing dependency on static dataset quality.

\section{Proposed Method}
In this section, the hypercube policy constraint framework is introduced, and a theoretical analysis of its performance is conducted. Subsequently, two novel algorithms are formulated through the integration of the proposed framework with baseline offline reinforcement learning (RL) algorithms.

\subsection{Hypercube policy regularization framework}

The hypercube constraint framework proposed in this section partitions the state space into hypercubes, where the exploration of any state-action pair is confined to similar states and corresponding actions within the same hypercube. The hypercube partitioning can be obtained through the following two steps. The initial step is to divide the state space in the static dataset into a hypercube by utilizing an integer $\delta$. This is achieved by obtaining the coordinates of the $i$-th dimension of any state $s$ in the static dataset in hypercube space, as outlined below:
\begin{equation}
s'_{i} = ({\delta}({s_i} - s_i^{\min }))\bmod (s_i^{\max } - s_i^{\min }), i=1, 2, ..., n_{1}-1
\label{gridOpertor}
\end{equation}
Where $\delta$ determines the accuracy of the hypercube segmentation, $ s_i^{\min}$ and $s_i^{\max}$ are respectively the minimum and maximum values of the $i$-th dimension of states in the static dataset. $n_{1}$ is the dimension of state space.

Second, for the sake of convenience, the coordinates of each state are converted into an integer value:
\begin{equation}
\label{3}
   v(s',a')= \sum^{n_{1}-1}_{i=0} {s'_{i} \times (\delta +1)} 
\end{equation}
If any two states $s_{i}$ and $s_{j}$ correspond to the same $v(s',a')$, then the two states are in the same hypercube.

\begin{figure}[htbp]
	\centering
	\begin{minipage}{0.49\linewidth}
		\centering
	\includegraphics[width=0.95\linewidth]{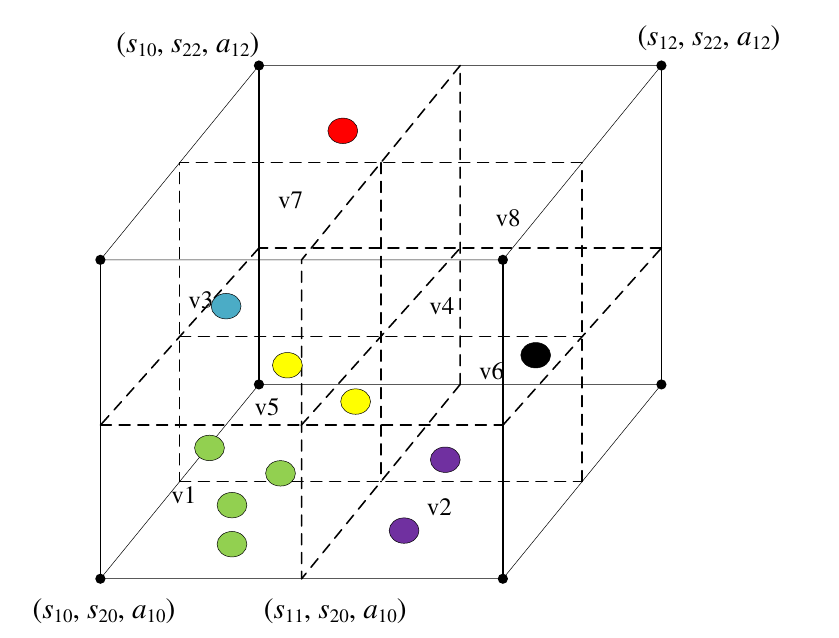}
		\caption{Hypercubed state space of dimension 3, choosing $\delta=2$.}
		\label{fig}
	\end{minipage}
	\begin{minipage}{0.49\linewidth}
		\centering
		\includegraphics[width=0.85\linewidth]{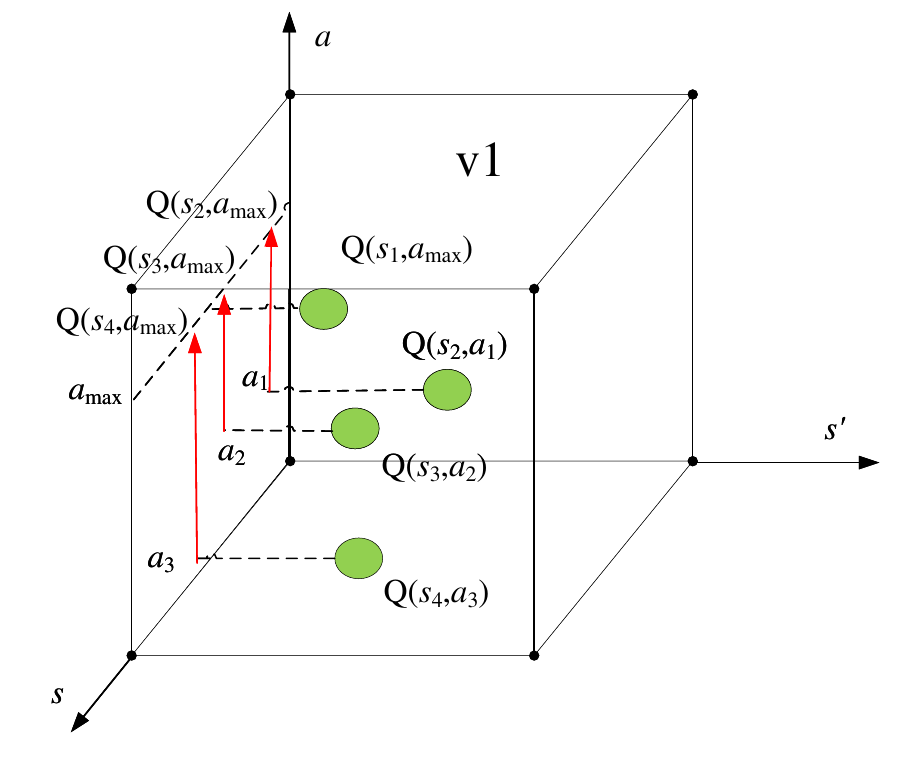}
		\caption{All states can explore $a_{max}$ in the hypercube.}
		\label{fig3}
	\end{minipage}
\end{figure}

A hypercubed state space with dimension 3, the $\delta=2$ is illustrated in Fig \ref{fig}. Each circle represents a distinct state within the static dataset. The number of states within each hypercube $\ge 0$. During the training phase, the agent is permitted to explore the actions corresponding to the states in the same cube.

After obtaining the above hypercube, for all actions $a_{1}, a_{2},..., a_{m}$ in a certain hypercube $v(s',a')$, the best action $a_{\max}$ can be found by doing the following methodology:
\begin{equation}
    {a_{\max }} = \arg_{a = {a_1},{a_2}, ..., {a_m}} \max Q(s,a)
\end{equation}
In detail, in any state, in order to select the better action and achieve the goal of enabling the agent to explore within a certain range, the Q-value $Q(s,a_{\max})$ of selecting the current optimal action within the same hypercube is compared with the Q-value $Q(s,a)$ of selecting the action corresponding to that state itself like Fig\ref{fig3}. Fig \ref{fig3} shows the hypercube v1 in Fig \ref{fig}, in which any state action is allowed to explore the current optimal action $a_{max}$, thereby achieving local exploration.
Finally, the hypercube policy regularization framework can be derived by integrating the constraint with a Markov decision process. Fig \ref{fig:enter-label} illustrates a training flow chart that excludes the evaluation phase. At the beginning of training, the static dataset is hypercubed and used to constrain the current action $a$ to obtain a new action $a_{\rm new}$ during training. At the end of a loop, the process is reversed to update the policy $\pi_{\theta}$, and this step is repeated to output a policy $\pi_{\theta}$.
\begin{figure}[htbp]
    \centering
    \includegraphics[width=0.95\linewidth]{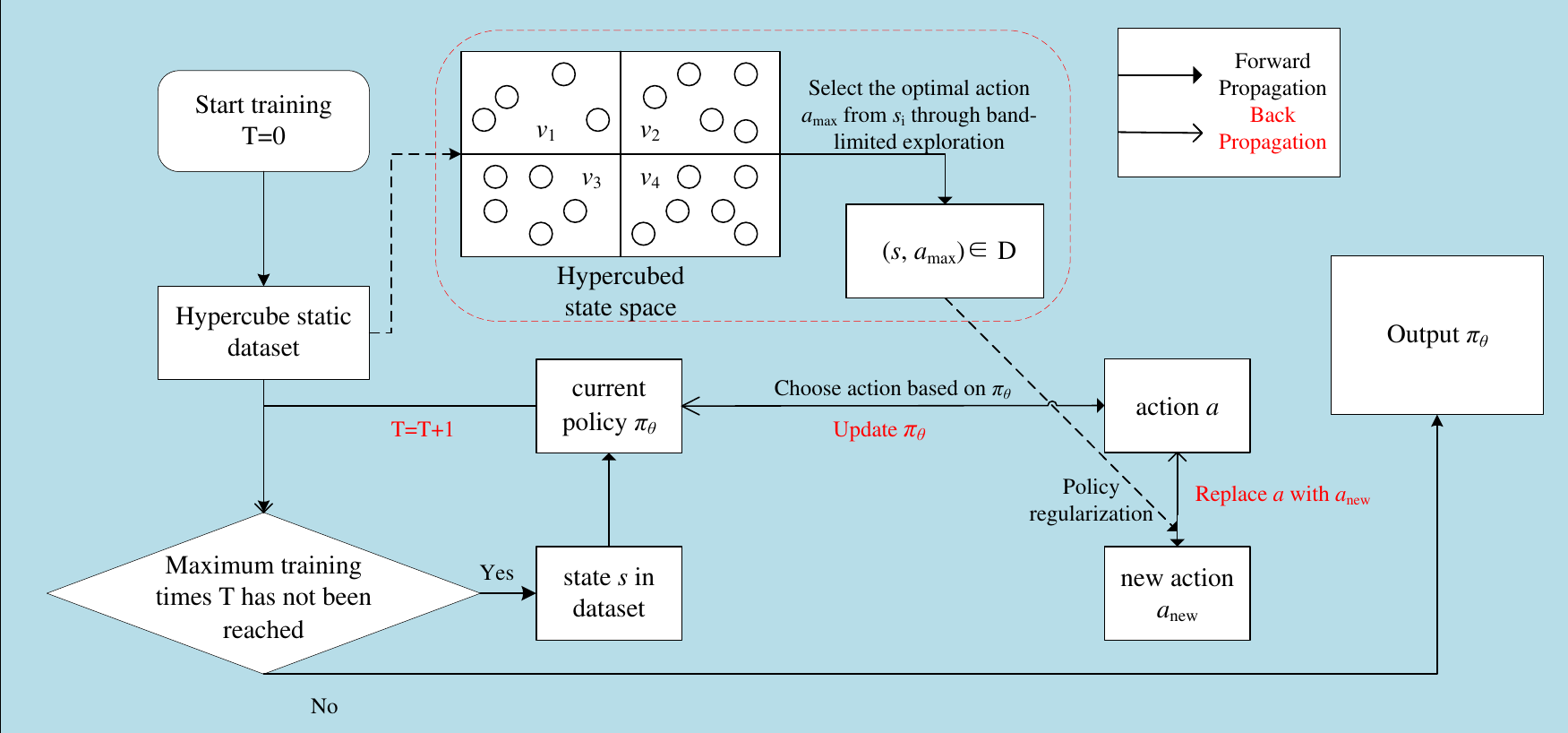}
    \caption{Application of hypercube constraint framework in agent training.}
    \label{fig:enter-label}
\end{figure}

\subsection{Theoretical analysis}

The hypercube policy constraint framework is guaranteed to maintain or enhance algorithm performance when Q-function approximators provide perfect value estimations. However, practical implementations must address imperfect Q-value estimations. In this section, theoretical analysis demonstrates that modifying the value of $\delta$ allows the hypercube constraint framework to enhance or sustain the performance of the algorithm regardless of the Q-value estimation.

\begin{assumption}
\label{ass}
Assuming that the Q-function learned satisfies the Lipschitz condition, that is there exists a positive constant $K$ for all $(s_{1}, a_{1}),(s_{2}, a_{2})$:
\begin{equation}
\label{aseq}
\left| {Q({s_1},{a_1}) - Q({s_2},{a_2})} \right| \le K_{Q}\left| {({s_1} \oplus {a_1}) - ({s_2} \oplus {a_2})} \right|
\end{equation}
\end{assumption}
 Where $\oplus$ is the vector concatenation operation, this assumption can be satisfied when the Q-function is approximated by neural networks\cite{gouk2021regularisation}. Thus, it could be demonstrated that the hypercube policy regularization framework ensures the algorithm's performance does not deteriorate under Assumption \ref{ass}.

\begin{theorem}
\label{theorem}
When choosing an appropriate hypercube segment precision $\delta$ within the hypercube policy regularization framework, the performance of the algorithms is not diminished for any state $s$, that is:
\begin{equation}
\label{thry}
    Q(s,{a^{\pi _{\rm new}}}) \ge Q(s,{a^{\pi _{\rm old}}})
\end{equation}
\end{theorem}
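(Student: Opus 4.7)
The strategy is to show $Q(s, a^{\pi_{new}}) - Q(s, a^{\pi_{old}}) \geq 0$ by separating a ``structural gain'' that is guaranteed to be nonnegative from an out-of-distribution estimation error that shrinks with $\delta$ via the Lipschitz hypothesis. I fix notation: let $a^{\pi_{old}} = a_s$ be the dataset action associated with state $s$, and let $a^{\pi_{new}} = a_{\max}$ be the action selected from $s$'s hypercube by equation (5). If $a_{\max} = a_s$ then (7) holds with equality and there is nothing to prove, so in what follows I focus on the case where $a_{\max}$ originates from some dataset state $s_m \neq s$ that shares the hypercube of $s$.

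The first ingredient is the \emph{structural gain}. Because $s$ belongs to its own hypercube, $a_s$ is one of the candidates $a_1,\ldots,a_m$ over which equation (5) takes the argmax, so by definition $Q(s, a_{\max}) \geq Q(s, a_s)$. The only threat to (7) is that the learned $Q$ may overestimate $Q(s, a_{\max})$ because $(s, a_{\max})$ is out-of-distribution, whereas $Q(s, a_s)$ is trustworthy by Assumption \ref{ass2}. The plan is to bound this OOD gap by leveraging the fact that $(s_m, a_{\max})$ \emph{is} in the dataset (so Assumption \ref{ass2} applies there) and that $s_m$ is close to $s$ (so Assumption \ref{ass} transfers the accurate estimate at $s_m$ to $s$).

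The second ingredient is the hypercube diameter bound. From the partitioning rule in equation (3), any two states sharing a hypercube satisfy $|s_i - s_{m,i}| \leq (s_i^{\max} - s_i^{\min})/\delta$ coordinate-wise, so $\|s - s_m\| \leq D(\delta)$ with $D(\delta) = O(1/\delta) \to 0$. Applying Assumption \ref{ass} to $(s, a_{\max})$ and $(s_m, a_{\max})$ then gives
\begin{equation}
|Q(s, a_{\max}) - Q(s_m, a_{\max})| \leq K\,\|(s \oplus a_{\max}) - (s_m \oplus a_{\max})\| \leq K\,D(\delta).\nonumber
\end{equation}
Combining this with $Q(s_m, a_{\max})$ being the accurate value at an in-dataset pair (Assumption \ref{ass2}) controls how inflated the estimate $Q(s, a_{\max})$ can be.

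The last step is to pick $\delta$ large enough that the controllable error $K D(\delta)$ is absorbed by the structural gain, yielding (7). In the extremal regime $\delta \to \infty$ every hypercube collapses to a singleton, $a_{\max} = a_s$, and (7) holds trivially with equality; this guarantees that at least one admissible $\delta$ exists. The main obstacle I anticipate is making ``appropriate $\delta$'' precise, because the structural gain $Q(s, a_{\max}) - Q(s, a_s)$ is data-dependent and may be arbitrarily small; the cleanest workaround is to derive a $\delta$-dependent sufficient condition of the form $K D(\delta) \leq Q(s,a_{\max}) - Q(s,a_s)$ (satisfied vacuously in the singleton case) and conclude that such a $\delta$ always exists, which is exactly what the theorem claims.
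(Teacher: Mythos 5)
Your proposal follows essentially the same route as the paper's proof: both rest on accurate in-dataset estimates (Assumption 2), a Lipschitz transfer of the Q-value from the neighboring in-dataset pair $(s_m, a_{\max})$ to the out-of-distribution pair $(s, a_{\max})$ with error $K\,S_{\max} = O(K/\delta)$, and the choice of $\delta$ large enough that this error is dominated by a data-dependent margin between candidate values. The paper merely organizes the same argument as a three-way case split on the sign of $Q(s,a) - Q(s',a')$, with the margins $b_1, b_2$ playing the role of your ``structural gain,'' and it shares the weakness you flag -- the margin may be arbitrarily small, so the existence of an ``appropriate $\delta$'' is only argued pairwise rather than uniformly over the dataset.
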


\begin{proof}

When a single state-action pair $(s, a)$ exists in the hypercube, then $\pi_{\rm new}=\pi_{\rm old}$ and $Q(s,a^{\pi_{\rm old}})=Q(s,a^{\pi_{\rm new}})$.

When there is another state-action $(s', a')$ in the hypercube, $Q (s, a')$ is discussed through $Q (s, a)$ and $Q (s', a')$. For this, three cases must be considered: $Q (s, a)<Q (s', a')$, $Q (s, a)=Q (s', a')$ and $Q (s, a)>Q (s', a')$.

When $Q(s,a)>Q(s',a')$, $Q (s, a)$ can be expressed as: 
\begin{equation}
\label{eqprof1}
    Q(s,a)=Q(s',a')+b_{1} 
\end{equation}
Here, $b_{1}>0$. 
By Assumption \ref{ass}, the following inequality can be obtained:
\begin{equation}
\label{eqprof2}
\begin{array}{c}
    Q(s,a') \le Q(s',a')+K|(s \oplus a') - (s' \oplus a')| \\ =Q(s',a')+K|s-s'|
\end{array}
\end{equation}
Combining \eqref{eqprof1} and \eqref{eqprof2}, it can be known that when $K|s-s'|<b_{1}$ , $Q(s,a') \le Q(s,a)$.

In a hypercube, the longest distance between two state-actions is $S_{max}$. $S_{max} \ge |s-s'|$ and $S_{max}$ is bounded due to the limited data in the static dataset. Because the longest distance in the hypercube is proportional to the size of the hypercube. When $\theta$ increases, $S_{max}$ decreases proportionally.
Therefore, there exists an integer $c_{1}$, when $\theta>c_{1}$, 
 \begin{equation}
      K|s-s'| \le KS_{max} \le b_{1}
 \end{equation}
 This guarantees that when $\theta>c_{1}$, $Q (s,a)>Q (s',a')$, the new policy is always worse than the original policy, so $\pi_{\rm new}=\pi_{\rm old}$, $Q(s,a^{\pi_{\rm new}})=Q(s,a^{\pi_{\rm old}})$ in this situation.

When $Q(s,a)=Q(s',a')$, $\pi_{\rm new}=\pi_{\rm old}$ is selected to obtain a conservative policy. Therefore, in this case, $Q(s,a^{\pi_{\rm new}})=Q(s,a^{\pi_{\rm old}})$.

While $Q(s',a')>Q(s,a)$, $Q (s', a')$ can be expressed as: 
\begin{equation}
\label{prfeq3}
    Q(s',a')=Q(s,a)+b_{2} 
\end{equation}
Here, $b_{2}>0$. 
By Assumption \ref{ass}, the following conclusions can be drawn:
\begin{equation}
\label{prfeq4}
\begin{array}{c}
    Q(s,a') \ge Q(s',a')-K_{Q}|(s \oplus a')-(s' \oplus a') | \\ =Q(s',a')-K_{Q}|s-s'|
\end{array}    
\end{equation}
Combining \eqref{prfeq3} and \eqref{prfeq4}, it can be known that 
when $K_{Q}|s-s'|<b_{2}$, $Q(s,a') \ge Q(s,a)$. Similar to $ Q(s',a') < Q(s,a)$, there exists an integer $c_{2}$, when $\theta>c_{2} $, 
\begin{equation}
    K_{Q}|s-s'| \le K_{Q}S_{max} \le b_{2}
\end{equation}
This demonstrates that when $\theta>c_{2}$, $Q(s',a') >  Q(s,a)$. Consequently, by designating $a'$ as $a_{\rm new}$, the new policy can guarantee that $Q(s,a^{\pi_{\rm new}}) \ge Q(s,a^{\pi_{\rm old}})$.

For the situation that there is only one extra $(s', a')$ in the hypercube and  $\theta>max \{ c_{1}, c_{2} \}$. When $Q (s', a')\le Q(s, a)$, the policy remains unchanged, $Q(s,a^{\pi_{\rm new}}) = Q(s,a^{\pi_{\rm old}})$. When $Q (s', a')> Q (s, a')$, $Q(s,a^{\pi_{\rm new}}) \ge Q(s,a^{\pi_{\rm old}})$. Therefore, \eqref{thry} holds in this case.

When there are multiple state-actions $(s_{i},a_{i}),i=1,2,...,n$ within the static dataset in each hypercube, refer to the above analysis of the existence of only one additional state-action in the hypercube, it means that When the Q-value of some state-actions is smaller than $Q (s, a)$, there exist $b_1 > 0$, $\theta >c_{1}$ make:
\begin{equation}
    Q(s,a) \ge Q(s,a_{i}), i=1,2,..,n
\end{equation}
While $ K_{Q}|s-s_{i}| \le K_{Q}S_{max} \le b_{1}$, $Q(s,a) \ge Q(s_{i},a_{i})$

When the Q-value of some state-actions is larger than $Q (s, a)$, there also exist $b_2 > 0$ and $\theta>c_{2}$ make:
\begin{equation}
    Q(s,a) \le Q(s,a_{i}) , i=1,2,..,n
\end{equation}
While $K|s-s_{i}| \le KS_{max} \le b_{2}$, $Q(s,a) \le Q(s_{i},a_{i})$.

Therefore, when there are multiple state-actions in a hypercube, selecting $\theta>max \{ c_{1}, c_{2} \}$ can also make $Q(s,{a^{\pi _{\rm new}}}) \ge Q(s,{a^{\pi _{\rm old}}})$.

\end{proof}
According to Therom\ref{theorem}, a large one $\delta$ can ensure that the performance of the algorithms remains unaltered or improves. In practice, the Q-function approximator can produce a relatively accurate estimate of the state-action pairs within the static dataset. Consequently, the uncertainty associated with the Q-value of actions within the static dataset is relatively low. Therefore, a smaller $\delta$ should be used to make the agent explore more actions and thus achieve the best result.

\subsection{Algorithms}
\begin{algorithm}[htbp]
    \caption{TD3-BC-C}
    \label{alg1}
    \textbf{Input}: Initial policy parameter ${\it \theta}$; min-batch size $k$; Q-function parameters $\phi_{1}$,$\phi_{2}$; Target Q-function parameters $\phi_{1}^{-}$, $\phi_{2}^{-}$; offline dataset ${\it D}$; actions obtained by hypercube policy regularization $a_{\rm max}$
    
    \textbf{Output}: Policy $\pi_{\theta}$
    \begin{algorithmic}[1] 
        \For{epoch=1 to MaxEpoch}
	   \For{step=1 to MaxStep}
            \State Sample $k$ state-action pairs $\left\{(s_{t} ,a_{t}),t=1,..,k \right\}$ from ${D}$
        \State Sample $a_{t+1} \sim \pi_{\theta}(a_{t+1}|s_{t+1})$ accroding to policy
        
        \State Update $Q_{{\phi _1}}$ and $Q_{{\phi _2}}$ by :
        
        $\begin{array}{l}
             E_{({s_t},a_{t},s_{t + 1})\sim D,a_{t + 1}\sim {\pi _\theta }}[(r({s_t},{a_t})  + \gamma \min Q_{{\phi _i}}({s_{t + 1}},{a_{t + 1}}) \\- Q_{\phi _i}({s_t},{a_t}))^2]
        \end{array}$
        \State Compared $\mathop {\min }\limits_{i = 1,2} {Q_{{\phi _i}}}(s,a)$ and $\mathop {\min }\limits_{i = 1,2} {Q_{{\phi _i}}}(s,a_{\rm max})$ to Update ${a'}$
        \State Update policy by \eqref{19}
        \State  Update Q target by: $ \phi_i^-=\rho \phi_i^-+(1-\rho)\phi_i $
        \EndFor
	   \EndFor
    \end{algorithmic}
\end{algorithm}
This section outlines how the hypercube policy regularization framework can be integrated with policy regularization algorithms like TD3-BC and Diffusion-QL. 

Taking TD3-BC as an example, our algorithm made only two modifications to TD3-BC. Firstly, a hypercube was constructed using Equation\eqref{3} and employed to store the optimal action $a_{\rm max}$within the hypercube. Each time the state $s$ corresponding to the hypercube is selected, a comparison is made between $Q(s,a)$ and $Q(s,a_{\rm max})$ to determine whether action $a$ should be replaced within the hypercube. Secondly, in policy updates, the action $a_{\rm max}$ from the hypercube is employed in place of $a$ from the static dataset for policy regularization. The policy update equation is given by: 
\begin{equation}
\label{19}
     \arg \min L(\theta ) = {L_c}(\theta ) - {L_q}(\theta ) 
\end{equation}
Where $\theta$ is the policy parameter, and $L_{c}$ is the policy regularization loss function. 

In TD3-BC and other policy regularization algorithms that try to clone the policy in the static dataset, the $L_c$ is:
\begin{equation}
L_c=E_{(s,a) \sim D}[{(a-\pi_{\theta}(s))^2}]
\end{equation}
Unlike the $L_c$ above, in our method, the $L_c$ is:
\begin{equation}
L_c=E_{(s,a) \sim D}[{(a_{\rm max}-\pi_{\theta}(s))^2}]
\end{equation}
Given that $Q(s,a_{\rm max}) \ge Q(s,a)$ and Theorem \ref{theorem}, it follows that the actions $a_{\rm max}$ within the hypercube will produce results that are either equal to or better than those produced by $a$ within the static dataset. Therefore, the utilization of \eqref{3} for behavior cloning can lead to a more effective policy.

The regularization term, $L_q$ varies across different algorithms. In TD3-BC, the $L_q$ is as follows:
\begin{equation}
L_{q} = \phi E_{s_{i} \sim D, a_{i}'\sim \pi_{\theta}}[\sum_{i=0}^{T-1} Q(s_{i},a_{i}')] / E_{(s_{i},a_{i}) \sim D}[\sum_{i=0}^{T-1} \left| Q(s_{i},a_{i}) \right|]     
\end{equation}
Here $\phi$ is the hyperparameter. 

Algorithm \ref{alg1} can be obtained by regularizing the policy as above.

In pseudocode, similar to TD3-BC, Diffusion-QL differs from TD3-BC only in $L_q$, which is:
\begin{equation}
L_{q} =\gamma {E_{s_{i}\sim D,a_{i}'\sim {\pi _\theta }}}[\sum_{i=0}^{T-1}{Q_\phi }(s_{i},a_{i}')]    
\end{equation}
Here $\gamma  = \eta /{E_{(s_{i},a_{i})\sim D}}[\sum_{i=0}^{T-1}|{Q_\phi }(s_{i},a_{i})|]$, $\eta$ is a hyperparameter. 

\section{Experiment}
In this section, the hypercube policy regularization framework is validated through comprehensive evaluations of TD3-BC-C and Diffusion-QL-C across multiple D4RL environments. All of our code is open source\footnote{https://github.com/lastTarnished/Hypercube-Policy-Regularization}.

\subsection{Experiment in Gym}
\subsubsection{Experiment setting}
Gym datasets are currently the most commonly used evaluation criterion tasks. Gym datasets typically have high-quality training curves, and the reward functions in Gym datasets are relatively smooth.

To verify the efficacy of the hypercube policy regularization framework, TD3-BC-C and Diffusion-QL-C were tested, and the test results were compared with some classical and advanced algorithms. The compared algorithms include TD3-BC\cite{fujimoto2021minimalist}, which is one of the benchmark algorithms; the Diffusion-QL algorithm\cite{wang2022diffusion}, which is the sota policy regularization algorithm and constitutes another benchmark algorithm; IQL\cite{kostrikov2021offline}, which is a policy regularization algorithm that uses a combination of V-value and Q-value functions; CQL\cite{kumar2020conservative}, a classical Q-value regularization algorithm.

Training is performed for 2000 epochs in the medium-expert environment and 1000 epochs for other types of environments, 5 random seeds are used for each environment. The results of the training are presented in Table \ref{result}.

\subsubsection{Experiment result}

Table \ref{result} demonstrates that TD3-BC-C exhibits optimal performance in $7$ of $12$ benchmark environments. When compared with TD3-BC, it shows better performance in all environments except walked-random. 
Moreover, in suboptimal static datasets (random/medium environments), the performance of TD3-BC-C and Diffusion-QL-C demonstrating 30\% and 20\% improvements over the SOTA algorithm Diffusion-QL, with overall performance enhancements of 9\% and 5\% across benchmark tests. These results demonstrate the effectiveness of the proposed method in enhancing the exploration capabilities of policy-constrained algorithms, particularly in suboptimal static datasets.

Notwithstanding these gains, in random environments, while TD3-BC-C and Diffusion-QL-C algorithms demonstrate superior performance, their performance in the lowest-quality datasets remains surpassed by SOTA algorithms\cite{bai2022pessimistic}. Consequently, room for improvement persists in such algorithms for low-quality static datasets.
\begin{table}[htbp]
\setlength{\tabcolsep}{0.9mm}
\caption{In the table, half denotes halfcheetah; hp denotes hopper; w denotes walker2d; r denotes random; m denotes medium; and e denotes expert, with the optimal result in each environment is highlighted in bold.}
    \centering
    \begin{tabular}{@{}ccccccc@{}}
\toprule
   Env  	  & IQL & CQL & TD3-BC  & Diffusion & \textbf{TD3-BC-C} & \textbf{Diffusion-C} \\
\midrule
       half-r         & $11.0 {\pm} 3$  & $\textbf{28.3} {\pm} \textbf{1}$     & $11.7 {\pm} 1$    & $21.0 {\pm} 1$     & $26.5 {\pm} 2$  & $23.8 {\pm} 1$     \\
        hp-r         & $7.8 {\pm} 0$ & $16.4 {\pm} 15$   & $8.5 {\pm} 0$    & $8.1 {\pm} 0$       & $\textbf{31.3} {\pm} \textbf{0.0}$ & $13.1 {\pm} 8$     \\
        w-r         & $6.4 {\pm} 0$  & $4.2 {\pm} 1$     & $1.5 {\pm} 1$     & $3.4 {\pm} 2$    & $0.0 {\pm} 0$ & $\textbf{20.2} {\pm} \textbf{3}$     \\
        half-m         & $47.7 {\pm} 0$  & $54.9 {\pm} 0$    & $48.1 {\pm} 0$    & $50.0 {\pm} 0$   & $\textbf{64.8} {\pm} \textbf{1}$ & $ 54.0 {\pm} 0$     \\
        hp-m              & $66.2 {\pm} 3$  & $42.7 {\pm} 38$    & $57.4 {\pm} 2$   & $74.3 {\pm} 6$     & $\textbf{100.6} {\pm} \textbf{1}$  & $ 97.9 {\pm} 4$    \\
        w-m           & $77.5 \pm 2$   & $73.6 \pm 2$   & $83.7 \pm 1$       & $86.3 \pm 0$   & $\textbf{93.8} \pm \textbf{1}$    & $84.6 \pm 0 $    \\
        half-m-r  & $43.7 \pm 0$  & $51.8 \pm 0$    & $45.1 \pm 10$      & $46.9 \pm 1$   & $\textbf{53.6} \pm \textbf{1}$     & $50.1 \pm 0$    \\
        hp-m-r       & $55.4 \pm 0$  & $94.4 \pm 4$   & $62.7 \pm 21$    & $\textbf{100.7} \pm \textbf{1}$  & $99.8 \pm 1$    & $98.8 \pm 1$    \\
        w-m-r     & $69.3 \pm 6$   & $84.0 \pm 4$      & $76.3 \pm 4$    & $\textbf{93.8} \pm \textbf{1}$    & $86.3 \pm 2$  & $\textbf{93.7} \pm \textbf{3}$    \\
        half-m-e  & $86.7 \pm 4$    & $80.4 \pm 13$       & $ 93.2 \pm 2$   & $\textbf{96.0} \pm \textbf{0}$   & $92.5 \pm 1$      & $\textbf{96.1} \pm \textbf{1}$    \\
        hp-m-e       & $91.5 \pm 4$  & $99.2 \pm 6$    & $96.8 \pm 4$     & $107.6 \pm 4$    & $\textbf{109.3} \pm \textbf{5}$ & $105.8 \pm 1$   \\
        w-m-e     & $109.3 \pm 1$  & $101.2 \pm 18$      & $110.2 \pm 0$     & $109.9 \pm 0$   & $\textbf{115.2} \pm \textbf{2}$  & $109.8 \pm 0$    \\
        Average                    & $56.0 \pm 2$   & $60.9 \pm 8$    & $58.0 \pm 4$        & $66.4 \pm 1$  &  $\textbf{72.7} \pm \textbf{1}$ & $ 70.3 \pm 2$      \\
\bottomrule
\label{result}
\end{tabular}
\end{table}

\subsection{Experiment in other D4RL environment}

To further validate the performance of the algorithms, additional experiments were conducted in Adroit and Maze environments. Maze environments comprise a series of maze navigation tasks. The agent is rewarded solely upon reaching the designated endpoint. It is therefore necessary for the algorithm to overcome sparse rewards. Adroit environments consist of some datasets that simulate a range of human activities. The complexity of human activities, the limited scope of the data and the relatively sparse rewards make it extremely challenging.

\begin{table}[htbp]
\setlength{\tabcolsep}{2mm}  
\caption{In the table, max represents the best result obtained during training and the corresponding error at this time, the optimal result in each environment is highlighted in bold.}
    \centering
    \begin{tabular}{@{}ccccc@{}}
\toprule
   Env  	   & TD3-BC  & Diffusion-QL & \textbf{TD3-BC-C} & \textbf{Diffusion-C} \\
\midrule
       maze-u           & $47.2 {\pm} 47.1$    & $45.8 {\pm} 59.0$     & $\textbf{95.9} {\pm} \textbf{8.9}$ &$91.0 {\pm} 4.8$       \\
        maze-u-d        & $40.3 {\pm} 8.9$    & $56.8 {\pm} 32.5$       & $59.3 \pm 3.0$ & $\textbf{70.9} {\pm} \textbf{18.7}$     \\
        maze-m-d        & $3.2 {\pm} 2.5$     & $0.0 {\pm} 0.0$    & $\textbf{62.1} {\pm} \textbf{5.1}$ & $0.0 {\pm} 0.0$     \\
        maze-m-p        & $0.4 {\pm} 0.6$    & $0.0 {\pm} 0.0$   & $\textbf{59.9} {\pm} \textbf{8.7}$ & $ 0.4 {\pm} 3.5$     \\
        maze-l-d       & $0.0 {\pm} 0.0$   & $14.2 {\pm} 29.2$     & $25.2 {\pm} 6.7$  & $ \textbf{31.6} {\pm} \textbf{32.6}$    \\
        maze-l-p       & $0.0 \pm 0.0$       & $2.2 \pm 5.6$   & $20.1 \pm 6.7$    & $\textbf{25.5} \pm \textbf{14.9} $  \\
        maze-u-max           & $95.3 {\pm} 1.2$    & $90.2 {\pm} 2.7$     & $94.0 {\pm} 1.9$  & $\textbf{95.9} {\pm} \textbf{8.9}$     \\
        maze-u-d-max        & $70.5 {\pm} 7.6$    & $\textbf{75.7} {\pm} \textbf{7.1}$       & $59.3 {\pm} 3.0$ & $75.4 {\pm} 9.9$     \\
        maze-m-d-max        & $6.0 {\pm} 1.6$     & $75.6 {\pm} 5.3$    & $66.4 {\pm} 4.3$ & $\textbf{82.9} {\pm} \textbf{4.8}$     \\
        maze-m-p-max        & $2.8 {\pm} 4.9$    & $54.8 {\pm} 14.7$   & $\textbf{59.9} {\pm} \textbf{8.7}$ & $ 46.0 {\pm} 37.3$     \\
        maze-l-d-max       & $0.7 {\pm} 1.1$   & $50.3 {\pm} 5.7$     & $27.7 {\pm} 10.2$  & $ \textbf{55.0} {\pm} \textbf{13.7}$    \\
        maze-l-p-max       & $0.5 \pm 0.8$       & $\textbf{50.4} \pm \textbf{4.8}$   & $24.8 \pm 1.9$    & $39.9 \pm 11.6 $    \\
        pen-c            & $5.0 \pm 3.7$      & $33.0 \pm 8.7$   & $\textbf{46.0} \pm \textbf{1.3}$     & $31.0 \pm 8.7$    \\
        pen-h     & $-2.4 \pm 2.2$      & $57.8 \pm 8.6$  & $\textbf{68.6} \pm \textbf{8.8}$    & $49.9 \pm 1.1$    \\
        Average     & $19.3 \pm 5.9$        & $43.3 \pm 13.1$  &  $\textbf{54.9} \pm \textbf{5.7}$ & $ 49.7 \pm 12.2$      \\
\bottomrule
\label{result2}
\end{tabular}
\end{table}


To ensure a fair comparison, Table \ref{result2} includes the score after training for 1000 epochs and the highest score during the training process. This is due to the significant performance fluctuations in Maze environments. 
More experiment results are presented in the Appendices\footnote{https://arxiv.org/abs/2411.04534}. 

In the Maze environment experiments, satisfactory performance was not achieved by either baseline algorithm. This outcome is attributed to the environment's requirement for extensive exploration to identify optimal navigation paths, which is not met by the conservative nature of Q-constrained algorithms.
On the other hand, experimental results in the Adroit environment, which necessitates meticulous imitation of complex human motions, demonstrate that the hypercube policy regularization framework does not compromise imitation capability. Rather, it enables refinement of imitated actions, leading to enhanced performance.

\section{Conclusions}
To resolve the excessive constraint issue in policy regularization methods that restricts further policy improvement, a hypercube policy constraint framework is proposed. The framework expands the policy optimization space while maintaining constraint strength through a hypercube-based partitioning mechanism. By integrating this framework with baseline algorithms TD3-BC and Diffusion-QL, the TD3-BC-C and Diffusion-QL-C algorithms are developed. 
Experimental results on the D4RL dataset demonstrate that TD3-BC-C and Diffusion-QL-C achieve SOTA performance among policy-constrained algorithms. Due to its ability to reduce the dependency of policy constraint algorithms on dataset quality, a promising research direction is established for future research.

 \section{Acknowledgments}
This work was supported by the National Natural Science Foundation of China [grant numbers: 12201656], Science and Technology Projects in Guangzhou [grant numbers: SL2024A04J01579] and Key Laboratory of Information Systems Engineering (CN).
\bibliographystyle{splncs04}
\bibliography{main}
\end{document}